\relax
\documentclass[letterpaper]{article} 
\usepackage{aaai22}  
\usepackage{times}  
\usepackage{helvet}  
\usepackage{courier}  
\usepackage[hyphens]{url}  
\usepackage{graphicx} 
\urlstyle{rm} 
\usepackage{natbib}  
\usepackage{caption} 
\DeclareCaptionStyle{ruled}{labelfont=normalfont,labelsep=colon,strut=off} 
\frenchspacing  
\setlength{\pdfpagewidth}{8.5in}  
\setlength{\pdfpageheight}{11in}  
%
\usepackage{algorithm}
\usepackage{placeins}
\usepackage{algorithmicx}
\usepackage{tabularx}
\usepackage{graphicx}
\usepackage{placeins}
\usepackage{centernot}
\usepackage{siunitx}
\usepackage{xspace}
\usepackage{amsmath}
\usepackage{hyperref}
\usepackage{booktabs}
\usepackage{multirow}
\usepackage{subcaption}
\usepackage{makecell}
\usepackage[noend]{algpseudocode}
\newtheorem{theorem}{Theorem}

\newenvironment{proof}{{\bfseries Proof}}{}

%
\usepackage{newfloat}
\usepackage{listings}
\lstset{%
	basicstyle={\footnotesize\ttfamily},
	numbers=left,numberstyle=\footnotesize,xleftmargin=2em,
	aboveskip=0pt,belowskip=0pt,%
	showstringspaces=false,tabsize=2,breaklines=true}
\floatstyle{ruled}
\newfloat{listing}{tb}{lst}{}
\floatname{listing}{Listing}
%
%
\pdfinfo{
/Title (ePA*SE: Edge-Based Parallel A* for Slow Evaluations)
/Author (Shohin Mukherjee, Sandip Aine, Maxim Likhachev)
/TemplateVersion (2022.1)
}

\setcounter{secnumdepth}{0} 

%


\title{ePA*SE: Edge-Based Parallel A* for Slow Evaluations}

\author{
Shohin Mukherjee, Sandip Aine, Maxim Likhachev
}
\affiliations{
    The Robotics Institute, CMU\\
    \{shohinm, asandip, mlikhach\}@andrew.cmu.edu\\
}

\usepackage{bibentry}

\newcommand{\epase}{ePA*SE\xspace}
\newcommand{\eas}{eA*\xspace}
\newcommand{\wepase}{\mbox{w-}\epase}
\newcommand{\wepaseold}{\wepase w/o thread mgt.}
\newcommand{\weas}{w-\eas\xspace}
\newcommand{\state}{\ensuremath{\mathbf{s}}\xspace}
\newcommand{\midstate}{\ensuremath{\state_m}\xspace}
\newcommand{\States}{\ensuremath{\mathcal{S}}\xspace}
\newcommand{\startstate}{\ensuremath{\state_0}\xspace}
\newcommand{\ac}{\ensuremath{\mathbf{a}}\xspace}
\newcommand{\dac}{\ensuremath{\mathbf{a^d}}\xspace}
\newcommand{\pac}{\ensuremath{\mathbf{a^d}}\xspace}
\newcommand{\Aset}{\ensuremath{\mathcal{A}}\xspace}
\newcommand{\goalreg}{\ensuremath{\mathcal{G}}\xspace}
\newcommand{\ed}{\ensuremath{e}\xspace}
\newcommand{\ped}{\ensuremath{e^d}\xspace}
\newcommand{\edge}{\ensuremath{(\mathbf{s},\mathbf{a})}\xspace}
\newcommand{\pedge}{\ensuremath{(\mathbf{s},\mathbf{a^d})}\xspace}
\newcommand{\pedgenext}{\ensuremath{(\mathbf{s}',\mathbf{a^d})}\xspace}
\newcommand{\open}{\ensuremath{\textit{OPEN}}\xspace}
\newcommand{\closed}{\ensuremath{\textit{CLOSED}}\xspace}
\newcommand{\be}{\ensuremath{\textit{BE}}\xspace}
\newcommand{\gval}{\ensuremath{g}}
\newcommand{\gopt}{\gval^*}
\newcommand{\cost}{\ensuremath{c}\xspace}
\newcommand{\costopt}{\ensuremath{\cost^*}\xspace}
\newcommand{\fval}{\ensuremath{f}}
\newcommand{\hval}{\ensuremath{h}}
\newcommand{\wh}{\ensuremath{w}\xspace}
\newcommand{\wi}{\ensuremath{\epsilon}\xspace}
\newcommand{\numexpanded}{\ensuremath{n\_successors\_generated}\xspace}
\newcommand{\graph}{\ensuremath{G}\xspace}
\newcommand{\vertex}{\ensuremath{v}\xspace}
\newcommand{\Vertices}{\ensuremath{\mathcal{V}}\xspace}
\newcommand{\Edges}{\ensuremath{\mathcal{E}}\xspace}
\newcommand{\plan}{\ensuremath{\pi}\xspace}
\newcommand{\numthreads}{\ensuremath{N_t}\xspace}
\newcommand{\bfactor}{\ensuremath{M}\xspace}

\newcommand{\pick}{\textsc{Pick}\xspace}
\newcommand{\place}{\textsc{Place}\xspace}
\newcommand{\swap}{\textsc{SwitchArm}\xspace}

\newcommand{\assumption}{\textbf{Assumption}\xspace}
\newcommand{\contradiction}{\textbf{Contradiction}\xspace}

\newcommand{\tsim}{\ensuremath{t_s}\xspace}
\newcommand{\tplan}{\ensuremath{t_p}\xspace}


\begin{document}

\maketitle

\begin{abstract}
Parallel search algorithms harness the multithreading capability of modern processors to achieve faster planning. One such algorithm is PA*SE (Parallel A* for Slow Expansions), which parallelizes state expansions to achieve faster planning in domains where state expansions are slow. In this work, we propose \epase (Edge-Based Parallel A* for Slow Evaluations) that improves on PA*SE by parallelizing edge evaluations instead of state expansions. This makes \epase more efficient in domains where edge evaluations are expensive and need varying amounts of computational effort, which is often the case in robotics. On the theoretical front, we show that \epase provides rigorous optimality guarantees. In addition, \epase can be trivially extended to handle an inflation weight on the heuristic resulting in a bounded suboptimal algorithm \wepase (Weighted \epase) that trades off optimality for faster planning. On the experimental front, we validate the proposed algorithm in two different planning domains: 1) motion planning for 3D humanoid navigation and 2) task and motion planning for a dual-arm robotic assembly task. We show that \epase can be significantly more efficient than PA*SE and other alternatives. The open-source code for ePA*SE along with the baselines is available here: \\\url{https://github.com/shohinm/parallel_search}

\end{abstract}

\FloatBarrier
\section{Introduction}
Graph search algorithms such as A* and its variants~\cite{hart1968formal, pohl1970heuristic, aine2016multi} are widely used in robotics for task and motion planning problems which can be formulated as a shortest path problem on an embedded graph in the state-space of the domain~\cite{kusnur2021planning, mukherjee2021reactive}. A* maintains an open list (priority queue) of discovered states, and at any point in the search, it expands the state with the smallest priority (f-value) in the list. During the expansion, it generates the successors of the state and evaluates the cost of each edge connecting the expanded state to its successors. In robotics applications such as in motion planning, edge evaluation tends to be the bottleneck in the search. For example, in planning for robot-manipulation, edge evaluation typically corresponds to collision-checks of a robot model against the world model at discrete interpolated states on the edge. Depending on how these models are represented (meshes, spheres, etc.) and how finely the edges are sampled for collision-checking, evaluating an edge can get very expensive. As a consequence, the state expansions are typically slow.

In order to speed up planning in domains where state expansions are slow, an optimal parallelized planning algorithm PA*SE (Parallel A* for Slow Expansions) and its (bounded) suboptimal version wPA*SE (Weighted PA*SE) were developed~\cite{phillips2014pa}. Unlike other parallel search algorithms, in which the number of times a state can be re-expanded increases with the degree of parallelization~\cite{irani1986parallel, zhou2015massively, he2021efficient}, PA*SE expands states in a way that each state is expanded at most once. The key idea in PA*SE is that a state \state can be expanded before another state $\state'$ if $\state$ is \textit{independent} of $\state'$ i.e. expansion of $\state'$ cannot lead to a shorter path to $\state$. If the independence relationship holds in both directions i.e. $\state'$ is also independent of $\state$, then $\state$ and $\state'$ can be expanded in parallel. Though PA*SE parallelizes state expansions, for a given state, the successors are generated sequentially. This is not the most efficient strategy, especially for domains with large branching factors. In addition, this strategy is particularly inefficient in domains where there is a large variance in the edge evaluation times. Consider a state being expanded with several outgoing edges, such that the first edge is expensive to evaluate, while the others are relatively inexpensive. In this case, since a single thread is evaluating all of the edges in sequence, the evaluations of the cheap edges will be held up by the one expensive edge. This happens often in planning for robotics. Consider full-body planning for a humanoid. Evaluating a primitive that moves just the wrist joint of the robot requires collision checking of just the wrist. However, evaluating the primitive that moves the base of the robot, requires fully-body collision checking of the entire robot. One way to avoid this would be to evaluate the outgoing edges in parallel, which PA*SE doesn't do. However, this seemingly trivial modification does not solve another cause of inefficiency in PA*SE i.e. the evaluation of the outgoing edges from a given state is tightly coupled with the expansion of the state. In other words, all the outgoing edges from a given state must be evaluated at the same time when the state is expanded. This leads to more edges being evaluated than is necessary, as we will show in our experiments.

Therefore in this work, we develop an improved optimal parallel search algorithm, ePA*SE (Edge-Based Parallel A* for Slow Evaluations), that eliminates these inefficiencies by 1) decoupling edge evaluation from state expansions and 2) parallelizing edge evaluations instead of state expansions. ePA*SE exploits the insight that the root cause of slow expansions is typically slow edge evaluations. Each ePA*SE thread is responsible for evaluating a single edge, instead of expanding a state and evaluating all outgoing edges from it, all in a single thread, like in PA*SE. This makes ePA*SE significantly more efficient than PA*SE and we show this by evaluating it on two planning domains that are quite different: 1) 3D indoor navigation of a mobile manipulator and 2) a task and motion planning problem of stacking a set of blocks by a dual-arm robot.

\FloatBarrier
\section{Related Work}
Parallel planning algorithms seek to make planning faster by leveraging parallel processing. 

\subsubsection{Parallel sampling-based algorithms}
There are a number of approaches that parallelize sampling-based planning algorithms. Probabilistic roadmap (PRM) based methods, in particular, can be trivially parallelized, so much so that they have been described as ``embarrassingly parallel"~\cite{amato1999probabilistic}. In these approaches, several parallel processes cooperatively build the roadmap in parallel~\cite{jacobs2012scalable}. Parallelized versions of RRT have also been developed in which multiple cores expand the search tree by sampling and adding multiple new states in parallel~\cite{devaurs2011parallelizing, ichnowski2012parallel, jacobs2013scalable, park2016parallel}. However, in many planning domains involving planning with controllers~\cite{butzke2014state}, sampling of states is typically not possible. One such class of planning domains where state sampling is not possible is simulator-in-the-loop planning, which uses an expensive physics simulator to generate successors~~\cite{liang2021search}. We, therefore, focus on the more general technique of search-based planning which does not rely on state sampling. 

\subsubsection{Parallel search-based algorithms}
A trivial approach to achieve parallelization in weighted A* is to generate successors in parallel when expanding a state. The downside is that this leads to minimal improvement in performance in domains with a low branching factor. Another approach that Parallel A*~\cite{irani1986parallel} takes, is to expand states in parallel while allowing re-expansions to account for the fact that states may get expanded before they have the minimal cost from the start state. This leads to a high number of state expansions. There are a number of other approaches that employ different parallelization strategies~\cite{evett1995massively, zhou2015massively, burns2010best}, but all of them could potentially expand an exponential number of states, especially if they employ a weighted heuristic. In contrast, PA*SE~\cite{phillips2014pa} parallelly expands states at most once, in such a way that does not affect the bounds on the solution quality. Though PA*SE parallelizes state expansions while preventing re-expansions, as explained earlier, it is not efficient in domains where edge evaluations are expensive since each PA*SE thread sequentially evaluates the outgoing edges of a state being expanded. A parallelized lazy planning algorithm, MPLP~\cite{mukherjee2022mplp}, achieves faster planning by running the search and evaluating edges asynchronously in parallel. Just like all lazy search algorithms, MPLP assumes that successor states can be generated without evaluating edges, which allows the algorithm to defer edge evaluations and lazily proceed with the search. However, this assumption doesn't hold true for a number of planning domains in robotics. In particular, consider planning problems that use a high-fidelity physics simulator to evaluate actions involving object-object and object-robot interactions~\cite{liang2021search}. The generation of successor states is typically not possible without a very expensive simulator call. In such domains, where edge evaluation cannot be deferred, MPLP is not applicable. One of the domains in our experiments falls in this class of problems.

\subsubsection{GPU-based parallel algorithms} There has also been work on parallelizing A* search on a single GPU~\cite{zhou2015massively} or multiple GPUs~\cite{he2021efficient} by utilizing multiple parallel priority queues. Since these approaches must allow state re-expansions, the number of expansions increases exponentially with the degree of parallelization. In addition, GPU-based parallel algorithms have a more fundamental limitation which stems from the single-instruction-multiple-data (SIMD) execution model of a GPU. This means that a GPU can only run the same set of instructions on multiple data concurrently. This severely limits the design of planning algorithms in several ways. Firstly, the code for expanding a state must be identical, irrespective of what state is being expanded. Secondly, the set of states must be expanded in a batch. This is problematic in domains that have complex actions that correspond to forward simulating dissimilar controllers. In contrast to these approaches, ePA*SE achieves parallelization of edge evaluations on the CPU which has a multiple-instruction-multiple-data (MIMD) execution model. This allows ePA*SE the flexibility to efficiently parallelize dissimilar edges, and therefore generalize across all types of planning domains.

\FloatBarrier
\section{Problem Definition}
\label{sec:problem}
Let a finite graph $\graph = (\Vertices, \Edges)$ be defined as a set of vertices \Vertices and directed edges \Edges. Each vertex $\vertex \in \Vertices$ represents a state \state in the state space of the domain \States. An edge $\ed \in \Edges$ connecting two vertices $\vertex_1$ and $\vertex_2$ in the graph represents an action $\ac \in \Aset$ that takes the agent from corresponding states $\state_1$ to $\state_2$. In this work, we assume that all actions are deterministic. Hence an edge \ed can be represented as a pair \edge, where \state is the state at which action \ac is executed. For an edge \ed, we will refer to the corresponding state and action as $\ed.\state$ and $\ed.\ac$ respectively. In addition, we will use the following notations:

\begin{itemize}
    \item \startstate is the start state and \goalreg is the goal region.
    \item $\cost:\Edges \rightarrow [0,\infty]$ is the cost associated with an edge.
    \item $\gval(\state)$ or g-value is the cost of the best path to \state from \startstate found by the algorithm so far.
    \item $\hval(\state)$ is a consistent and therefore admissible heuristic~\cite{russell2010artificial}. It never overestimates the cost to the goal.
\end{itemize}

A path \plan is defined by an ordered sequence of edges $\ed_{i=1}^N = \edge_{i=1}^N$, the cost of which is denoted as 
$\cost(\plan) = \sum_{i=1}^N \cost(\ed_i)$.
The objective is to find a path \plan from $\state_0$ to a state in the goal region \goalreg with the optimal cost \costopt. There is a computational budget of \numthreads threads available, which can run in parallel. Similar to PA*SE, we assume there exists a pairwise heurisitic function $\hval(\state, \state')$ that provides an estimate of the cost between any pair of states. It is forward-backward consistent i.e. $\hval(\state, \state'') \leq \hval(\state, \state') + \hval(\state', \state'')~\forall~\state,\state',\state''$ and $\hval(\state, \state') \leq \costopt(\state, \state')~\forall~\state, \state'$. Note that using \hval~for both the unary heuristic $\hval(\state)$ and the pairwise heuristic $\hval(\state, \state')$ is a slight abuse of notation, since these are different functions.

\FloatBarrier
\section{Method}
\label{sec:methods}
\begin{figure*}[ht]
    \centering
    \includegraphics[width=\textwidth]{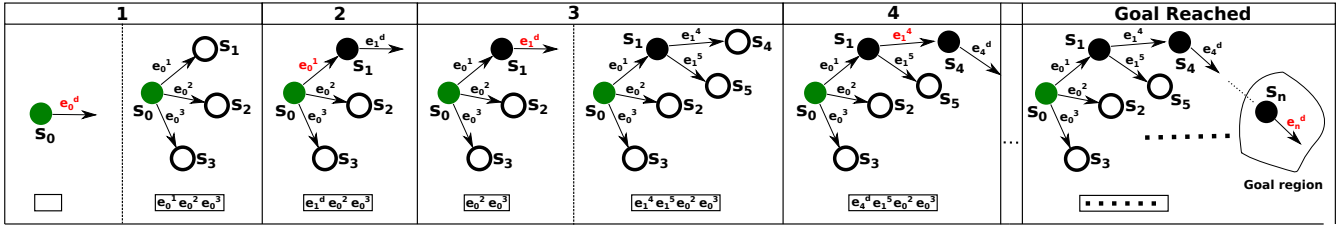}
    \caption{Example of \eas: (1) The dummy edge $\ed_0^d$ originating from \startstate is expanded and the real edges $[\ed_0^1, \ed_0^2, \ed_0^3]$ are inserted into \open. (2) $\ed_0^1$ is expanded, during which it is evaluated and the successor $\state_1$ is generated. A dummy edge $\ed_1^d$ from $\state_1$ is inserted into \open. (3) $\ed_1^d$ is expanded and the real edges $[\ed_1^4,\ed_1^5]$ are inserted into \open. (4) $\ed_1^4$ is expanded, during which it is evaluated and the successor $\state_4$ is generated and a dummy edge $\ed_4^d$ is inserted into open. This goes on until a dummy edge $\ped_n$ is expanded whose source state belongs to the goal region i.e. $\state_n \in \goalreg$.}
    \label{epase/fig/overview}
\end{figure*}

\epase leverages the key algorithmic contribution of PA*SE i.e. parallel expansions of independent states but instead uses it to parallelize edge evaluations. In doing so, \epase further improves the efficiency of PA*SE in domains with expensive to evaluate edges. \epase obeys the same invariant as A* and PA*SE that when a state is expanded, its g-value is optimal. Therefore, every state is expanded at most once. However, unlike in PA*SE, where each thread is responsible for expanding a single state at a time, each \epase thread is responsible for evaluating a single edge at a time. In order to build up to \epase, we first describe a serial version of the proposed algorithm \eas (Edge-based A*). We then explain how \eas can be parallelized to get to \epase, using the key idea behind PA*SE.

\subsubsection{\eas} 
\label{epase/sec/method/overview}
The first key algorithmic difference in \eas as compared to A* is that the open list \open contains edges instead of states. We introduce the term \textit{expansion of an edge} and explicitly differentiate it from the expansion of a state. In A*, during the expansion of a state, all its successors are generated and, unless they have already been expanded, are either inserted into the open list or repositioned with the updated priority. In \eas, expansion of an edge \edge involves evaluating the edge to generate the successor state $\state'$ and adding/updating (but not evaluating) the edges originating from $\state'$ into \open with the same priority of $\gval(\state') + \hval(\state')$. This choice of priority ensures that the edges originating from states that would have the same (state-) expansion priority in A* have the same (edge-) expansion priority in \eas. A state is defined as \textit{partially expanded} if at least one (but not all) of its outgoing edges has been expanded or is under expansion, while it is defined as  \textit{expanded} if all its outgoing edges have been expanded. \eas uses the following data structures as the key ingredients of the algorithm.
\begin{itemize}
    \item \open: A priority queue of edges (not states) that the search has generated but not expanded, where the edge with the smallest key/priority is placed in the front of the queue. The priority of an edge $\ed=\edge$ in \open is
    $\fval\left(\edge\right) = \gval(\state) + \hval(\state)$.
    \item \be: The set of states that are partially expanded.
    \item \closed: The set of states that have been expanded.
\end{itemize}

Naively storing edges instead of states in \open introduces an inefficiency. In A*, the g-value of a state \state can change many times during the search until the state is expanded at which point it is added to \closed. Every time this happens, \open has to be rebalanced to reposition \state. In \eas, every time $\gval(\state)$ changes, the position of all of the outgoing edges from \state need to be updated in \open. This increases the number of times \open has to be rebalanced, which is an expensive operation. However, since the edges originating from \state have the same priority i.e.  $\gval(\state) + \hval(\state)$, this can be avoided by replacing all the outgoing edges from \state by a single \textit{dummy} edge $\ped=\pedge$, where \pac stands for a dummy action. The dummy edge stands as a placeholder for all the \textit{real} edges originating from \state. Every time $\gval(\state)$ changes, only the dummy edge has to be repositioned. Unlike what happens when a real edge is expanded, when the dummy edge \pedge is expanded, it is replaced by the outgoing real edges from \state in \open. When a state's dummy edge is expanded or is under expansion, it is also considered to be partially expanded and is therefore added to \be. When all the outgoing real edges of a state have been expanded, it is moved from \be to \closed. The g-value $\gval(\state)$ of a state \state in either \be or \closed can no longer change and hence the real edges originating from \state will never have to be updated in \open.  

\eas can be trivially extended to handle an inflation factor on the heuristic like wA* which leads to a more goal-directed search \weas (Weighted \eas). Fig.~\ref{epase/fig/overview} show an example of \weas in action. Let $\ed_i^j$ refer to an edge from state $\state_i$ to $\state_j$ and $\ped_i$ refer to a dummy edge from $\state_i$. The states that are generated are shown in solid circles. The hollow circles represent states that are not generated and hence the incoming edges to these states are not evaluated. During the first expansion, the dummy edge $\ped_0$ originating from \startstate is expanded and the real edges $[\ed_0^1, \ed_0^2, \ed_0^3]$ are inserted into \open. In the second expansion, the edge $\ed_0^1$ is expanded, during which it is evaluated and the successor $\state_1$ is generated. A dummy edge ($\ped_1$) from $\state_1$ is inserted into \open. In the third expansion, $\ed_1^d$ is expanded and the real edges $[\ed_1^4,\ed_1^5]$ are inserted into \open. In the fourth expansion, the edge $\ed_1^4$ is expanded, during which it is evaluated and the successor $\state_4$ is generated and a dummy edge $\ped_4$ is inserted into open. This goes on until a dummy edge $\ped_n$ is expanded whose source state belongs to the goal region i.e. $\state_n \in \goalreg$. 

If the heuristic is informative, \weas evaluates fewer edges than wA*. In the example shown in Fig.~\ref{epase/fig/overview}, the edges $[\ed_0^2, \ed_0^3, \ed_1^5]$ do not get evaluated. Since wA* evaluates all outgoing edges of an expanded state, these edges would be evaluated in the case of wA* (with the same heuristic and inflation factor) when their source states are expanded ($\state_0$ and $\state_1$). Additionally, similar to how wPA*SE parallelizes wA*, \weas can be parallelized to obtain a highly efficient algorithm \wepase. Since \wepase is a trivial extension of \epase, we instead describe how \eas can be parallelized to obtain \epase.

\subsubsection{\eas to \epase}
\eas can be parallelized using the key idea behind PA*SE i.e. parallel expansion of independent states, and applying it to edge expansions, resulting in \epase. \epase has two key differences from PA*SE that makes it more efficient:

\begin{enumerate}
    \item Evaluation of edges is decoupled from the expansion of the source state giving the search the flexibility to figure out what edges need to be evaluated.
    \item Evaluation of edges is parallelized.
\end{enumerate}

In addition to \open and \closed, PA*SE uses another data structure \be (Being Expanded) to store the set of states currently being expanded by one of the threads. It uses a pairwise independence check on states in the open list to find states that are safe to expand in parallel. A state \state is safe to expand if $\gval(\state)$ is already optimal. In other words, there is no other state that is currently being expanded (in \be), nor in \open that can reduce $\gval(\state)$. Formally, a state \state is defined to be independent of state \state' iff

\begin{equation}
    \label{eq:state_independence}
    \gval(\state) - \gval(\state') \leq \hval(\state',\state)
\end{equation}

It can be proved that \state is independent of states in \open that have a larger priority than \state~\cite{phillips2014pa}. However, the independence check has to be performed against the states in \open with a smaller priority than \state as well as the states that are in \be.

Like in \eas, \be in \epase stores the states that are partially expanded, as per the definition of partial expansion in \eas. Since \epase stores edges in \open instead of states and each \epase thread expands edges instead of states, the independence check has to be modified. An edge \ed is safe to expand if Equations~\ref{eq:ind_check_1}~and~\ref{eq:ind_check_2} hold.

\begin{align}
    \label{eq:ind_check_1}
    \begin{split}
        \gval(\ed.\state) - \gval(\ed'.\state) \leq \hval(\ed'.\state, \ed.\state)\\
        \forall\ed' \in  \open~|~\fval\left(\ed'\right) < \fval\left(\ed\right)
    \end{split}
\end{align}

\begin{align}
    \label{eq:ind_check_2}
    \begin{split}
        \gval(\ed.\state) - \gval(\state') \leq \hval(\state', \ed.\state)~\forall\state' \in \be
    \end{split}
\end{align}

Equation~\ref{eq:ind_check_1} ensures that there is no edge in \open with a priority smaller than that of \ed, that upon expansion, can lower the g-value of $\ed.\state$ and hence lower the priority of \ed. In other words, the source state \state of edge \ed is independent of the source states of all edges in \open which have a smaller priority than \ed. Equation~\ref{eq:ind_check_2} ensures that there is no partially expanded state which can lower the g-value of $\ed.\state$. In other words, the source state \state of edge \ed is independent of all states in \be.

\subsubsection{Details}
\begin{algorithm}[]
\caption{\label{alg:epase} ePA*SE}
\begin{footnotesize}
\begin{algorithmic}[1]
\State $\Aset\gets\text{ action space }$, $\numthreads \gets$ number of threads, $\graph \gets \emptyset$
\State $\startstate\gets\text{ start state }$, $\goalreg\gets\text{ goal region}$, $terminate \gets \text{False}$
\Procedure{Plan}{}
    \State $\forall\state\in\graph$,~$\state.\gval\gets\infty$,~$\numexpanded(s)=0$
    \State $\state_0.\gval\gets0$
    \State insert $(\startstate, \dac)$ in \open
    \Comment{Dummy edge from \startstate}
    \State LOCK
    \While{$\textbf{not } terminate$}
        \If{$\open=\emptyset\textbf{ and }\be=\emptyset$}
            \State $terminate = \text{True}$
            \State UNLOCK
            \State $\Return~\emptyset$
        \EndIf
        \State remove an edge \edge from \open that has the \newline\hspace*{2.9em} smallest $\fval(\edge)$ among all states in \open that \newline\hspace*{2.9em} satisfy Equations \ref{eq:ind_check_1} and \ref{eq:ind_check_2}
        \label{alg:epase/open_pop}
        \If{such an edge does not exist}
            \State UNLOCK
            \State wait until \open or \be change
            \label{alg:epase/wait}
            \State LOCK
            \State continue
        \EndIf
        \If{$\state \in \goalreg$}
            \State $terminate = \text{True}$
            \State UNLOCK
            \State $\Return~\textsc{Backtrack(\state)}$
            \label{alg:epase/construct_path}
        \Else
            \State UNLOCK
            \While{\edge has not been assigned a thread}
                \For{$i=1:\numthreads$}
                    \If{thread $i$ is available}
                        \If{thread $i$ has not been spawned}
                            \State Spawn $\textsc{EdgeExpandThread}(i)$
                            \label{alg:epase/spawn}
                        \EndIf
                        \State Assign \edge to thread $i$
                        \label{alg:epase/assign_edge}
                    \EndIf
                \EndFor
            \EndWhile
            \State LOCK
        \EndIf
    \EndWhile 
    \State $terminate = \text{True}$
    \State UNLOCK
\EndProcedure
\Procedure{EdgeExpandThread}{$i$}
    \While{$\textbf{not } terminate$}
        \If{thread $i$ has been assigned an edge \edge}
            \State $\textsc{Expand}\left(\edge\right)$
        \EndIf
    \EndWhile
\EndProcedure
\Procedure{Expand}{$\edge$}
    \State LOCK
    \If{$\ac = \pac$}
        \State insert \state in \be
        \label{alg:epase/add_be}
        \For{$\ac \in \Aset$}
            \State $\fval\left(\edge\right) = \gval(\state) + \hval(\state)$
            \State insert \edge in \open with $\fval\left(\edge\right)$
        \EndFor
    \Else
        \State UNLOCK
        \State $\state', \cost\left(\edge\right)  \gets \textsc{GenerateSuccessor}
        \left(\edge\right)$
        \label{alg:epase/evaluate}
        \State LOCK
        \If{$\state' \notin \closed\cup\be$~and\\ ~~~~~~~~~~~~$\gval(\state')>\gval(\state)+\cost\left(\edge\right)$}
            \State $\gval(\state') = \gval(\state) + \cost\left(\edge\right)$
            \State $\state'.parent = \state$ 
            \State $\fval\left(\pedgenext\right) = \gval(\state') + \hval(\state')$
            \State insert/update \pedgenext in \open with $\fval\left(\pedgenext\right)$
        \EndIf
        \State $\numexpanded(\state)+=1$
        \If{$\numexpanded(\state) = |\Aset|$}
            \State remove \state from \be
            \label{alg:epase/remove_be}
            \State insert \state in \closed
            \label{alg:epase/add_closed}
        \EndIf
    \EndIf
    \State UNLOCK
\EndProcedure
\end{algorithmic}
\end{footnotesize}
\end{algorithm}

The pseudocode for \epase is presented in Alg.~\ref{alg:epase}. The main planning loop in \textsc{Plan} runs on a single thread (thread $0$), and in Line~\ref{alg:epase/open_pop}, an edge is removed for expansion from \open that has the smallest possible  priority and is also safe to expand, as per Equations~\ref{eq:ind_check_1}~and~\ref{eq:ind_check_2}. If such an edge is not found, the thread waits for either \open or \be to change in Line~\ref{alg:epase/wait}. If a safe to expand edge is found, such that the source state of the edge belongs to the goal region, the solution path is returned by backtracking from the state to the start state using back pointers (like in A*) in Line~\ref{alg:epase/construct_path}. Otherwise, the edge is expanded assigned to an edge expansion thread (thread $i=1:\numthreads$) in Line~\ref{alg:epase/assign_edge}. The edge expansion threads are spawned as and when needed to avoid the overhead of running unused threads (Line~\ref{alg:epase/spawn}). The search terminates when either a solution is found, or when \open is empty and all threads are idle (\be is empty), in which case there is no solution.

If the edge to be expanded is a dummy edge, the source \state of the edge is marked as partially expanded by adding it to \be (Line~\ref{alg:epase/add_be}). The real edges originating from \state are added to \open with the same priority as that of the dummy edge i.e. $\gval(\state) + \hval(\state)$.  If the expanded edge is not a dummy edge, it is evaluated (Line~\ref{alg:epase/evaluate}) to obtain the successor $\state'$ and the edge cost $\cost\left(\edge\right)$. This is the expensive operation that \epase seeks to parallelize, which is why it happens lock-free. If the expanded edge reduces $\gval(\state')$, the dummy edge originating from $\state'$ is added/updated in \open. A counter \numexpanded keeps track of the number of outgoing edges that have been expanded for every state. Once all the outgoing edges for a state have been expanded, and hence the state has been expanded, it is removed from \be and added to \closed (Lines~\ref{alg:epase/remove_be} and \ref{alg:epase/add_closed}).

\subsubsection{Thread management}

In PA*SE, the state expansion threads are spawned at the start and each of them independently pulls out states from the open list to expand. When the number of threads is higher than the number of independent states available for expansion at any point in time, the operating system has an unnecessary overhead of spinning unused threads. This causes the overall performance to go down as the number of unused threads goes up (see Fig.~6 in ~\cite{phillips2014pa}). Our initial experiments showed that using a similar thread management strategy in ePA*SE leads to a similar degradation in performance as the number of threads is increased beyond the optimal number of threads, even though the peak performance of ePA*SE is substantially higher than that of PA*SE. In order to prevent this degradation in performance, ePA*SE employs a different thread management strategy. There is a single thread that pulls out edges from the open list and it spawns edge expansion threads as needed but capped at \numthreads (Line~\ref{alg:epase/spawn}). When \numthreads is higher than the number of independent edges available for expansion at any point in time, only a subset of available threads get spawned preventing performance degradation, as we will show in our experiments.

\subsubsection{\wepase}

\wepase is a bounded suboptimal variant of \epase that trades off optimality for faster planning. Similar to wPA*SE, \wepase introduces two inflation factors, the first of which, $\wi\geq1$, relaxes the independence rule (Equations~\ref{eq:ind_check_1}~and~\ref{eq:ind_check_2}) as follows.

\begin{align}
    \label{eq:ind_check_3}
    \begin{split}
        \gval(\ed.\state) - \gval(\ed'.\state) \leq \wi\hval(\ed'.\state, \ed.\state)\\
        \forall\ed' \in  \open~|~\fval\left(\ed'\right) < \fval\left(\ed\right)
    \end{split}
\end{align}

\begin{align}
    \label{eq:ind_check_4}
    \begin{split}
        \gval(\ed.\state) - \gval(\state') \leq \wi\hval(\state', \ed.\state)~\forall\state' \in \be
    \end{split}
\end{align}

The second factor $\wh\geq1$ is used to inflate the heuristic in the priority of edges in \open i.e. $\fval\left(\edge\right)=\gval(\state)~+~\wh~\cdot~\hval(\state)$ which makes the search more goal directed. As long as $\wi\geq\wh$, the solution cost is bounded by $\wi\cdot\costopt$ (Theorem~\ref{th:suboptmality}). Note that \wh can be greater than \wi, but then  Equation~\ref{eq:ind_check_3} has to consider source states of all edges in \open and the solution cost will be bounded by $\wh\cdot\costopt$ (Theorem~\ref{th:1}). Since this leads to significantly more independence checks, the $\wi\geq\wh$ relationship is typically recommended in practice.

\FloatBarrier
\section{Properties}
\label{sec:Properties}
\wepase has identical properties to that of wPA*SE~\cite{phillips2014pa} and can be proved similarly with minor modifications.

\begin{theorem}[\textbf{Bounded suboptimal expansions}]
\label{th:1}
When \wepase that performs independence checks against all states in \be and source states of \textbf{all} edges in \open, chooses an edge \ed for expansion, then $\gval(\ed.\state) \leq \lambda\gopt(\state)$, where $\lambda = \max(\wi, \wh)$.
\end{theorem}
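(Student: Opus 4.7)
The plan is to adapt the standard bounded-suboptimality argument used for wPA*SE to the edge-centric setting, isolating the place where the edge-based independence checks and the dummy-edge mechanism enter the analysis. I would argue by considering the optimal path $\pi^* = \state_0, \state_1, \ldots, \state_k = \ed.\state$ to the source of the edge being popped, and locating along it a suitable frontier state on which to pivot.

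First I would establish a "frontier lemma": let $\state_i$ be the last state on $\pi^*$ that is not yet in \closed. I would argue that $\state_i$ must either (a) lie in \be, or (b) have at least one of its outgoing edges (real or dummy) sitting in \open; otherwise $\state_{i-1}$ would still have an unexpanded outgoing edge pointing to $\state_i$, contradicting maximality of $i$ (and if $i=0$, this is immediate since the dummy edge from $\startstate$ was inserted at initialisation). At the same time, because every state preceding $\state_i$ on $\pi^*$ is in \closed, a standard inductive argument on the optimal-path prefix gives $\gval(\state_i) \leq \gopt(\state_i)$: each earlier expansion along the path can only have improved the g-value of its successor, and the update step in \textsc{Expand} preserves $\gval(\state_i) \leq \gval(\state_{i-1}) + \cost(\state_{i-1}, \state_i)$.

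Next I would split on which independence inequality applies to $\state_i$. If $\state_i \in \be$, Equation~\ref{eq:ind_check_4} (with inflation $\wi$) gives $\gval(\ed.\state) - \gval(\state_i) \leq \wi \hval(\state_i, \ed.\state)$. If instead $\state_i$ is the source of an edge $\ed' \in \open$ with $\fval(\ed') < \fval(\ed)$, Equation~\ref{eq:ind_check_3} yields the same inequality. In both cases, combining with $\gval(\state_i)\leq\gopt(\state_i)$, the pairwise admissibility $\hval(\state_i,\ed.\state) \leq \costopt(\state_i, \ed.\state) = \gopt(\ed.\state)-\gopt(\state_i)$, and $\wi\geq1$, yields $\gval(\ed.\state) \leq \wi\,\gopt(\ed.\state)$. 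The remaining case is when every outgoing edge of $\state_i$ in \open has $\fval \geq \fval(\ed)$; here I would use the priority ordering $\gval(\ed.\state) + \wh\hval(\ed.\state) \leq \gval(\state_i) + \wh\hval(\state_i)$, combined with consistency of the unary heuristic, $\hval(\state_i)-\hval(\ed.\state)\leq\costopt(\state_i,\ed.\state)$, to derive $\gval(\ed.\state) \leq \wh\,\gopt(\ed.\state)$. Taking $\lambda=\max(\wi,\wh)$ covers all cases.

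The main obstacle I anticipate is the frontier lemma, specifically verifying that the dummy-edge bookkeeping in \epase does not open a loophole that breaks the standard PA*SE argument. In particular, I must rule out the possibility that $\state_i$ is partially expanded in a way that leaves it neither in \be nor with an outstanding edge in \open, and I must check that the update which lowers $\gval(\state_i)$ correctly re-inserts (or re-prioritises) its dummy edge; the \textsc{Expand} procedure handles this explicitly by adding/updating $(\state_i,\dac)$ whenever $\gval(\state_i)$ improves, provided $\state_i \notin \closed \cup \be$. The rest of the argument is then a structural mirror of the wPA*SE bounded-expansion proof, with edges playing the role of states and dummy edges acting as stand-ins so that priority updates remain cheap.
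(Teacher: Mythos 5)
Your overall architecture matches the paper's: locate a frontier state on an optimal path to $\ed.\state$, show it is visible to the independence check (in \be or with an outstanding edge in \open), and combine the relaxed independence inequality with pairwise admissibility. Your frontier lemma and its justification via the dummy-edge bookkeeping are sound and are essentially what the paper argues. However, there is one genuine gap: the claim that $\gval(\state_i) \leq \gopt(\state_i)$ for the frontier state, obtained by propagating $\gval(\state_{j+1}) \leq \gval(\state_j) + \cost(\state_j,\state_{j+1})$ along the closed prefix of the optimal path. That inequality is only guaranteed if the g-value update actually fires, and in \textsc{Expand} the update is guarded by $\state' \notin \closed\cup\be$. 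In the weighted setting a state $\state_j$ on the prefix can enter \be (and hence have its g-value frozen) at a value that is merely $\lambda$-suboptimal, before the optimal path to it has been relaxed; the later evaluation of the edge $(\state_{j-1},\state_j)$ then skips the update and the error propagates forward. So $\gval(\state_i) \leq \gopt(\state_i)$ is false in general; only $\gval(\state_i) \leq \lambda\gopt(\state_i)$ holds.

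The repair is exactly the paper's setup: argue by contradiction, taking $\ed$ to be the first edge selected for expansion whose source violates the $\lambda$ bound. Then the predecessor $\state_p$ of the frontier state has had all of its outgoing edges selected earlier, so $\gval(\state_p) \leq \lambda\gopt(\state_p)$ held at that time and is frozen thereafter, which gives $\gval(\midstate) \leq \lambda\gopt(\state_p) + \cost(\state_p,\midstate) \leq \lambda\gopt(\midstate)$ using $\lambda \geq 1$. Your closing algebra then goes through verbatim with this weaker premise; each of your cases yields $\gval(\ed.\state) \leq \lambda\gopt(\ed.\state)$ rather than $\wi\gopt(\ed.\state)$ or $\wh\gopt(\ed.\state)$, which is precisely what the theorem asserts. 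A minor remark: since the theorem's hypothesis is that the independence check is run against source states of \emph{all} edges in \open, your third case (edges with $\fval \geq \fval(\ed)$ handled via the priority ordering and consistency) is not needed here; it is essentially the content of Theorem~\ref{th:2}, which the paper invokes separately to justify restricting the check when $\wh\leq\wi$.
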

\begin{proof}
Assume, for the sake of contradiction, that $\gval(\ed.\state)>\lambda\gopt(\ed.\state)$ directly before edge \ed is expanded, and without loss of generality, that $\gval(\ed'.\state)\leq\lambda\gopt(\ed'.\state)$ for all edges $\ed'$ selected for expansion before \ed (\assumption). Let $\state = \ed.\state$ for ease of notation. Consider any 
cost-minimal path $\plan(\startstate, \state)$ from \startstate to \state. Let $\midstate$ be the closest state to \startstate on $\plan(\startstate, \state)$ such that either 1) there exists at least one edge in $\open$ with source state $\midstate$ or 2) $\midstate$ is in \be. $\midstate$ is no farther away from \startstate on $\plan(\startstate, \state)$ than \state since \ed is in \open. Therefore, let $\plan(\startstate, \midstate)$ and $\plan(\midstate, \state)$ be the subpaths of $\plan(\startstate, \state)$ from \startstate to $\midstate$  and from $\midstate$ to \state, respectively.

If $\midstate=\startstate$, then $\gval(\midstate)\leq\lambda\gopt(\midstate)$ since $\gval(\startstate)=\gopt(\startstate)=0$ ($\contradiction~\pmb{1}$). Otherwise, let $\state_p$ be the predecessor of \midstate on $\plan(\startstate, \state)$. $\state_p$ has been expanded (i.e. all edges outgoing edges of $\state_p$ have been expanded) since 
every state closer to \startstate on $\plan(\startstate, \state)$ than \midstate has been expanded 
(since every unexpanded state on $\plan(\startstate, \state)$ different from \startstate is either in \be or has an outgoing edge in \open, or has a state closer to \startstate on $\plan(\startstate, \state)$ that is either in \be or has an outgoing edge in \open). 
Therefore, since all outgoing edges from $\state_p$ have been expanded, $\gval(\state_p)\leq\lambda\gopt(\state_p)$ because of \assumption. Then, because of the \gval~update of \midstate when the edge from $\state_p$ to \midstate was expanded,
\begin{align}
    \label{eq:4}
    \gval(\midstate) &\leq \gval(\state_p) + \cost(\state_p, \midstate)\nonumber\\
    &\leq\lambda\gopt(\state_p) + \cost(\state_p, \midstate)
\end{align}

Since $\state_p$ is the predecessor of \midstate on the cost-minimal path $\plan(\startstate, \state)$, 
\begin{align}
    \label{eq:5}
    &\gopt(\midstate) = \gopt(\state_p) + \cost(\state_p, \midstate)\nonumber\\
    \implies&\gopt(\state_p) = \gopt(\midstate)- \cost(\state_p, \midstate)
\end{align}

Substituting $\gopt(\state_p)$ from Equation~\ref{eq:5} into Equation~\ref{eq:4},
\begin{align}
    \label{eq:1/1}
    \implies&\gval(\midstate) \leq \lambda\gopt(\midstate) - (\lambda-1)\cost(\state_p, \midstate)\nonumber\\
    \implies&\gval(\midstate) \leq \lambda\gopt(\midstate)\nonumber\\
    \implies& \lambda\cost(\plan(\startstate, \midstate)) = \lambda\gval^*(\midstate) \geq \gval(\midstate)
\end{align}

Since  $\hval(\midstate, \state)$ satisfies forward-backward consistency and is therefore admissible, $\hval(\midstate, \state) \leq \cost(\plan(\midstate, \state))$. Since, $\lambda=\max(\wi,\wh)$, $\wi \leq\lambda$, 
\begin{align}
    \label{eq:1/2}
    \lambda\cost(\plan(\midstate, \state)) \geq \lambda\hval(\midstate, \state) \geq \wi\hval(\midstate, \state)
\end{align}

Adding \ref{eq:1/1} and \ref{eq:1/2},
\begin{align}
    \label{eq:2}
    \lambda\cost(\startstate,\state) &= \lambda\cost(\startstate,\midstate) + \lambda\cost(\midstate,\state)\nonumber\\
    &\geq \gval(\midstate) + \wi\hval(\midstate, \state)
\end{align}

Assuming \wepase performs independence checks against states in \be and source states of all edges in \open when choosing an edge $\ed$ with source $\ed.\state$ to expand, and \midstate is either in \be or there exists atleast one edge with source \midstate in \open,
\begin{align}
\label{eq:3}
    &\wi\hval(\ed'.\state, \ed.\state) \geq \gval(\ed.\state) - \gval(\ed'.\state)\nonumber\\
    &\forall \ed' \in \open~|~\ed'.\state = \midstate\nonumber\\
    \implies&\gval(\midstate) + \wi\hval(\midstate,\state) \geq \gval(\state)
\end{align}

Therefore,
\begin{align*}
    \lambda\gopt(\state) =& \lambda\cost(\plan(\startstate, \state))\\
    \geq&\gval(\midstate) + \wi\hval(\midstate,\state)\tag*{(Using Eq.~\ref{eq:2})}\\
    \geq&\gval(\state)\tag*{(Using Eq.~\ref{eq:3})}\\\\
    &\text{($\contradiction~\pmb{2}$)}
\end{align*}

\contradiction~\textbf{1} and \contradiction~\textbf{2} invalidate the \assumption, which proves \textbf{Theorem~\ref{th:1}}.
\end{proof}

\begin{theorem}
\label{th:2}
If $\wh\leq\wi$, and considering any two edges \ed and \ed' in \open, the source state of \ed is independent of the source state of \ed' if $\fval(\ed)\leq\fval(\ed')$.
\end{theorem}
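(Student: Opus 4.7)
The plan is to show that under the hypotheses $\wh \leq \wi$ and $\fval(\ed) \leq \fval(\ed')$, the relaxed independence inequality (Equation~\ref{eq:ind_check_3}), namely $\gval(\ed.\state) - \gval(\ed'.\state) \leq \wi\,\hval(\ed'.\state, \ed.\state)$, is satisfied. The argument should be a short chain of inequalities and does not need induction or contradiction.

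First, I would unpack the definition of the priority. With the inflated heuristic, $\fval(\ed) = \gval(\ed.\state) + \wh\,\hval(\ed.\state)$ and similarly for $\ed'$. The assumption $\fval(\ed) \leq \fval(\ed')$ then rearranges to
\begin{equation*}
    \gval(\ed.\state) - \gval(\ed'.\state) \;\leq\; \wh\bigl(\hval(\ed'.\state) - \hval(\ed.\state)\bigr).
\end{equation*}
Second, I would invoke forward-backward consistency of the pairwise heuristic $\hval(\cdot,\cdot)$, together with the paper's convention that the unary heuristic $\hval(\state)$ coincides with $\hval(\state,\goalreg)$ (the slight abuse of notation the problem statement flags). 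Applying forward-backward consistency with the triple $(\ed'.\state, \ed.\state, \goalreg)$ yields
\begin{equation*}
    \hval(\ed'.\state) \;\leq\; \hval(\ed'.\state, \ed.\state) + \hval(\ed.\state),
\end{equation*}
so $\hval(\ed'.\state) - \hval(\ed.\state) \leq \hval(\ed'.\state, \ed.\state)$.

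Third, I would combine the two bounds. Since $\wh \geq 0$, substituting gives $\gval(\ed.\state) - \gval(\ed'.\state) \leq \wh\,\hval(\ed'.\state, \ed.\state)$. The admissibility assumption on the pairwise heuristic ensures $\hval(\ed'.\state, \ed.\state) \geq 0$, so the hypothesis $\wh \leq \wi$ lets us inflate the right-hand side to obtain $\gval(\ed.\state) - \gval(\ed'.\state) \leq \wi\,\hval(\ed'.\state, \ed.\state)$, which is exactly the condition for $\ed.\state$ to be independent of $\ed'.\state$ under the relaxed check.

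There is no real obstacle here: the argument is a three-line manipulation. The only subtlety worth emphasizing is the use of forward-backward consistency to bridge the unary heuristic (in $\fval$) and the pairwise heuristic (in the independence check); this is precisely where the assumption on $\hval$ from the Problem Definition is essential, and it is what makes the $\wh \leq \wi$ hypothesis (rather than the stricter $\wh = \wi$) sufficient.
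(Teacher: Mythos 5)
Your proof is correct and follows essentially the same route as the paper's: rearrange $\fval(\ed)\leq\fval(\ed')$ using the inflated priority, apply forward-backward consistency to bound $\hval(\ed'.\state)-\hval(\ed.\state)$ by $\hval(\ed'.\state,\ed.\state)$, and then use $\wh\leq\wi$ together with nonnegativity of the heuristic to reach the relaxed independence condition. Your version is in fact slightly more explicit than the paper's about which triple the consistency inequality is applied to and why the final inflation step is valid.
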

\begin{proof}
\begin{align*}
    \fval(\ed) &\leq \fval(\ed')\\
    \implies \gval(\ed.\state) + \wh\hval(\ed.\state) &\leq \gval(\ed'.\state) + \wh\hval(\ed'.\state)\\
    \implies \gval(\ed.\state) &\leq \gval(\ed'.\state) + \wh(\hval(\ed'.\state)-\hval(\ed.\state))\\
    &\leq \gval(\ed'.\state) +\wh\hval(\ed'.\state, \ed.\state)\\
    &\text{(forward-backward consistency)}\\
    &\leq \gval(\ed'.\state) + \wi\hval(\ed'.\state, \ed.\state)\\
    &\text{(since $\wh\leq\wi$)}
\end{align*}

Therefore, $\ed.\state$ is independent of $\ed'.\state$ by definition~(Eq.~\ref{eq:state_independence}).
\end{proof}

\begin{theorem}[Bounded suboptimality]
\label{th:suboptmality}
If $\wh\leq\wi$, and \wepase chooses a dummy edge $\ped=\pedge$ for expansion, such that the source state $\state$ belongs to the goal region i.e. $\state \in \goalreg$, then $\gval(\state)\leq\wi\gopt(\state)=\wi\cdot\costopt$.
\end{theorem}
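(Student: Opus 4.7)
The plan is to derive Theorem~\ref{th:suboptmality} as a direct corollary of Theorems~\ref{th:1}~and~\ref{th:2}. The key observation is that \wepase (in its efficient form) only performs the independence check against source states of edges in \open with \emph{smaller} priority, whereas Theorem~\ref{th:1} requires the check against source states of \emph{all} edges in \open. Theorem~\ref{th:2} lets us bridge this gap when $\wh\leq\wi$: any edge $\ed'$ with $\fval(\ed')\geq\fval(\ed)$ automatically satisfies the independence inequality for the source state of $\ed$, so no additional work is needed against those edges.

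Concretely, I would proceed as follows. First, I would argue that under the hypothesis $\wh\leq\wi$, the restricted independence check that \wepase actually performs (only against \be and source states of smaller-priority edges in \open) is equivalent to the full check against \be and source states of all edges in \open. For edges $\ed'\in\open$ with $\fval(\ed')<\fval(\ed)$, the check is performed explicitly; for edges $\ed'\in\open$ with $\fval(\ed')\geq\fval(\ed)$, Theorem~\ref{th:2} (applied with the roles of $\ed$ and $\ed'$ swapped, so that $\fval(\ed)\leq\fval(\ed')$) gives $\gval(\ed'.\state)\leq\gval(\ed.\state)+\wi\hval(\ed.\state,\ed'.\state)$. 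The needed inequality is the reverse: $\gval(\ed.\state)-\gval(\ed'.\state)\leq\wi\hval(\ed'.\state,\ed.\state)$. I would need to make sure the direction works out; the cleanest way is to apply Theorem~\ref{th:2} directly with $\ed$ as the smaller-priority edge, yielding exactly the inequality required by Equation~\ref{eq:ind_check_3} for the pair $(\ed,\ed')$.

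Second, having established that the restricted check implies the full check, I would invoke Theorem~\ref{th:1} on the chosen dummy edge $\ped=\pedge$ with $\state\in\goalreg$. Theorem~\ref{th:1} gives $\gval(\state)\leq\lambda\gopt(\state)$ with $\lambda=\max(\wi,\wh)$. Since $\wh\leq\wi$, we have $\lambda=\wi$, hence $\gval(\state)\leq\wi\gopt(\state)$. Finally, because $\state\in\goalreg$, $\gopt(\state)=\costopt$, and the conclusion $\gval(\state)\leq\wi\costopt$ follows.

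The main obstacle is the subtle bookkeeping in the first step: verifying that Theorem~\ref{th:2} does give the independence inequality in the direction required by Equation~\ref{eq:ind_check_3}, not merely a symmetric-looking but logically different one. Once that is pinned down, the rest of the proof is essentially a one-line reduction to Theorem~\ref{th:1}, since Theorem~\ref{th:2} ensures that \wepase's efficient check is no weaker than the full check used in Theorem~\ref{th:1} under the hypothesis $\wh\leq\wi$.
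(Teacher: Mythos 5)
Your proposal is correct and takes the same route as the paper, whose proof of Theorem~\ref{th:suboptmality} is literally the one-line statement that it follows directly from Theorems~\ref{th:1}~and~\ref{th:2}; you have simply made explicit the bridging step the paper leaves implicit, namely that Theorem~\ref{th:2} (applied with the chosen edge as the smaller-priority one) shows the restricted independence check against only smaller-priority edges in \open is as strong as the full check required by Theorem~\ref{th:1} when $\wh\leq\wi$, so that $\lambda=\wi$ gives the bound. Your care about the direction of the independence inequality is well placed, and your final resolution of it is the right one.
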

\begin{proof}
This directly follows from Theorems~\ref{th:1}~and~\ref{th:2}.
\end{proof}

\begin{theorem}[Completeness]
\label{th:completeness}
If there exists at least one path \plan in \graph from $\state_0$ to \goalreg, \wepase will find it.
\end{theorem}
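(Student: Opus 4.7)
The plan is to establish completeness in two stages: termination of the algorithm on any finite \graph, and then reachability of a goal state along any witness path.

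For termination, I would argue that because \graph is finite, both $|\Vertices|$ and $|\Edges|$ are bounded. The guards in \textsc{Expand}---inserting \state into \be only when its dummy edge is expanded, moving it to \closed after all $|\Aset|$ outgoing edges of \state have been processed, and blocking re-updates via the $\state'\notin\closed\cup\be$ check---together ensure that each real edge and each dummy edge is expanded at most once. Combined with the standing assumption that each \textsc{GenerateSuccessor} call returns in finite time, the total work of the algorithm is bounded and the main loop must exit in finite time.

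For progress in the parallel setting, I would show that whenever \open is non-empty the main thread eventually pops an edge rather than blocking forever at Line~\ref{alg:epase/wait}. The edge $\ed\in\open$ with the globally smallest $\fval$ trivially satisfies Equation~\ref{eq:ind_check_3}, so the only possible obstruction is Equation~\ref{eq:ind_check_4} together with a nonempty \be. But \be can only be modified by edge-expansion threads actively performing finite-time evaluations, so \be must change in bounded time, at which point the main thread re-checks and eventually succeeds.

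For goal reachability, I would proceed by induction along a witness path $\plan^* = (\startstate = \state_0, \ldots, \state_N = \state_g)$ with $\state_g \in \goalreg$, showing that each $\state_i$ is eventually represented in \open via its dummy edge or else present in $\be \cup \closed$. The base case follows from the initial insertion of the dummy edge from \startstate. For the inductive step, once $\state_i$ is discovered, its dummy edge is eventually expanded (by termination plus progress), which in turn inserts the real edge from $\state_i$ to $\state_{i+1}$ into \open; when that edge is later expanded, the guards on $\gval$ and on $\closed \cup \be$ membership ensure that $\state_{i+1}$ is either newly inserted into \open or already known to the search. Applying this up to $i = N$ yields that a dummy edge whose source lies in \goalreg is eventually popped at Line~\ref{alg:epase/open_pop}, triggering the return in Line~\ref{alg:epase/construct_path}.

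The main obstacle I anticipate is the progress argument under parallelism---specifically, ruling out deadlock when every candidate edge in \open fails the independence check. The key observation noted above, that the globally smallest-$\fval$ edge can only be blocked by \be and \be is guaranteed to shrink in bounded time as threads finish their edge evaluations, is what closes this loop. A secondary subtlety is the inductive step when $\state_{i+1}$ turns out to already lie in \closed via some other path, but this poses no problem since completeness only requires that some goal state be reached, not that the specific witness $\plan^*$ be the returned path.
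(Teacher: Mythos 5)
Your overall decomposition (termination on a finite graph, deadlock-freedom of the parallel main loop, and an induction along a witness path showing the goal's dummy edge is eventually popped) is a sound and standard route, and it is in fact far more explicit than the paper, which disposes of completeness in one line by appeal to Theorem~\ref{th:suboptmality} and the serial wA* argument. The termination and witness-path portions of your argument go through: each dummy edge is expanded at most once (re-insertion is blocked once the source enters $\be\cup\closed$), hence each real edge is inserted and expanded at most once, and your handling of the case where $\state_{i+1}$ is already in \closed or \be via another path is correct.

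The genuine gap is in your progress argument. You claim that \be ``must change in bounded time'' because it ``can only be modified by edge-expansion threads actively performing finite-time evaluations.'' But a state $\state'$ leaves \be only after \emph{all} of its outgoing real edges have been expanded, and those edges may be sitting untouched in \open rather than being evaluated by any thread; they are only ever popped by the main thread, which is exactly the thread you are assuming is blocked. So as stated, your argument is circular: it does not rule out the configuration where \be is nonempty, no worker is active, and the minimum-$\fval$ edge fails Equation~\ref{eq:ind_check_4}. The missing step is an $\fval$-ordering lemma in the spirit of Theorem~\ref{th:2}: if $\state'\in\be$ blocks the minimum-priority edge \ed, i.e.\ $\gval(\ed.\state)-\gval(\state')>\wi\hval(\state',\ed.\state)$, then for any outgoing edge $\ed'$ of $\state'$ still in \open, forward-backward consistency and $\wh\leq\wi$ give
\begin{align*}
\fval(\ed') = \gval(\state')+\wh\hval(\state') &< \gval(\ed.\state)-\wh\hval(\state',\ed.\state)+\wh\bigl(\hval(\state',\ed.\state)+\hval(\ed.\state)\bigr)\\
&= \fval(\ed),
\end{align*}
contradicting minimality of \ed. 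Hence any blocking state in \be has \emph{no} remaining edges in \open, so all of its remaining edges really are under active evaluation, and only then does your finite-evaluation-time argument apply; iterating over the finitely many blocking states (no new ones enter \be while the main thread waits, since only popped dummy edges add to \be) yields progress. With that lemma inserted, your proof closes; without it, the deadlock-freedom claim is unsupported. Note also that this is where the hypothesis $\wh\leq\wi$ (or the all-edges independence check of Theorem~\ref{th:1}) enters, so completeness should be stated under the same proviso as Theorem~\ref{th:suboptmality}.
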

\begin{proof}
This proof makes use of Theorem~\ref{th:suboptmality} and is similar to the equivalent proof of serial wA*.
\end{proof}

\FloatBarrier
\section{Evaluation}
We evaluate \wepase in two planning domains where edge evaluation is expensive. All experiments were carried out on Amazon Web Services (AWS) instances. All algorithms were implemented in C++.

\subsection{3D Navigation}

The first domain is motion planning for 3D ($x,y,\theta$) navigation of a PR2, which is a human-scale dual-arm mobile manipulator robot, in an indoor environment similar to the one used in \cite{narayanan2017heuristic} and shown in Fig.~\ref{epase/fig/nav3d_problem}. Here $x,y$ are the planar coordinates and $\theta$ is the orientation of the robot. The robot can move along 18 simple motion primitives that independently change the three state coordinates by incremental amounts. Evaluating each primitive involves collision checking of the robot model (approximated as spheres) against the world model (represented as a 3D voxel grid) at interpolated states on the primitive. Though approximating the robot with spheres instead of meshes speeds up collision checking, it is still the most expensive component of the search. The computational cost of edge evaluation increases with an increasing granularity of interpolated states at which collision checking is carried out. For our experiments, collision checking is carried out at interpolated states $1$ \si{\cm} apart. The search uses Euclidean distance as the admissible heuristic. We evaluate on 50 trials in each of which the start configuration of the robot and goal region are sampled randomly. We compare \wepase with other CPU-based parallel search baselines. The first baseline is a variant of weighted A* in which during a state expansion, the successors of the state are generated and the corresponding edges are evaluated in parallel. For lack of a better term, we call this baseline Parallel Weighted A* (PwA*). Note that this is very different from the Parallel A* (PA*) algorithm~\cite{irani1986parallel} which has already been shown to underperform wPA*SE~\cite{phillips2014pa}. The second baseline is wPA*SE. These two baselines leverage parallelization differently. PwA* parallelizes the generation of successors, whereas wPA*SE parallelizes state expansions. \wepase on the other hand parallelizes edge evaluations. We also compare against a variation of \wepase (\wepaseold) that uses the thread management strategy of wPA*SE as opposed to the improved thread management strategy described in the Method. Speedup over wA* is defined as the ratio of the average runtime of wA* over the average runtime of a specific algorithm.

\begin{figure}[]
    \centering
    \includegraphics[width=\columnwidth]{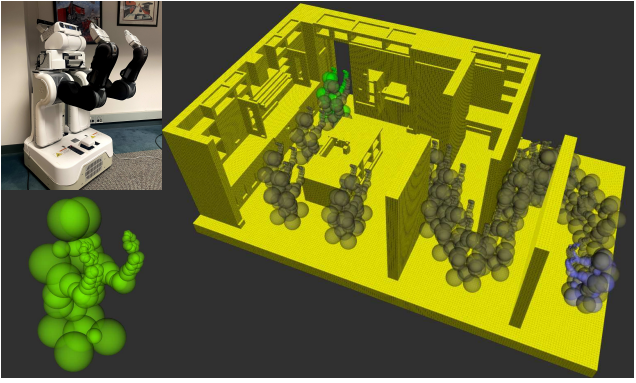}
    \caption{(Navigation) Left: The PR2's collision model is approximated with spheres. 
    Right: The task is to navigate in an indoor map from a given start (purple) and goal (green) states using a set of motion primitives. States at the end of every primitive in the generated plan are shown in black.}
    \label{epase/fig/nav3d_problem}
\end{figure}

\begin{figure}[]
    \centering
    \includegraphics[width=\columnwidth]{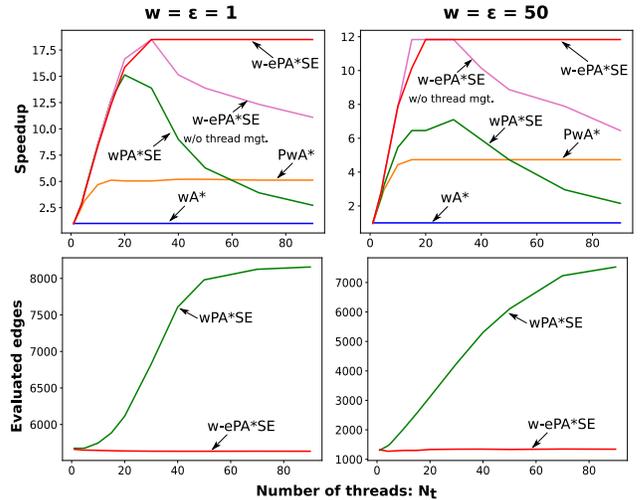}
    \caption{(Navigation) Top: Average speedup achieved by PwA*, wPA*SE and \wepase over wA*. Bottom: Number of edges evaluated by wPA*SE and \wepase.}
    \label{epase/fig/nav3d_plots}
\end{figure}

\begin{table*}[]
\footnotesize
\begin{subtable}{\textwidth}
\centering
\begin{tabular}{ccccccccccccc}
\toprule
\numthreads          & 1    & 4    & 5    & 10   & 15   & 20   & 30   & 40   & 50   & 70   & 90   \\\midrule
wA*       & 3.33  & -    & -    & -    & -    & -    & -    & -    & -    & -    & -    &  \multirow{5}{*}{$\wh = \wi = 1\ \ $}\\
PwA*      & 3.37  & 1.31    & 1.06    & 0.71    & 0.65    & 0.66    & 0.66    & 0.64    & 0.64    & 0.65    & 0.65  \\
wPA*SE     & 3.37 & 1.14 & 0.85 & 0.39 & 0.26 & 0.22 & 0.24 & 0.37 & 0.53 & 0.85 & 1.22 \\
\makecell{\wepase w/o thread mgt.} & 3.43 & 1.17 & 0.88 & 0.39 & 0.26 & 0.20 & 0.18 & 0.22 & 0.24 & 0.27 & 0.30 \\
\wepase      & \textbf{3.34}    & \textbf{1.17} & \textbf{0.87} & \textbf{0.40} & \textbf{0.27} & \textbf{0.21} & \textbf{0.18} & \textbf{0.18} & \textbf{0.18} & \textbf{0.18} & \textbf{0.18} \\
\midrule
\end{tabular}
\end{subtable}
\begin{subtable}{\textwidth}
\centering
\begin{tabular}{ccccccccccccc}
wA*       & 0.71  & -    & -    & -    & -    & -    & -    & -    & -    & -    & -  &  \multirow{5}{*}{$\wh = \wi = 50$}  \\
PwA*      & 0.72  & 0.29    & 0.24    & 0.16    & 0.15    & 0.15    & 0.15    & 0.15    & 0.15    & 0.15    & 0.15    \\
wPA*SE     & 0.71 & 0.28 & 0.22 & 0.13 & 0.11 & 0.11 & 0.10 & 0.12 & 0.15 & 0.24 & 0.33 \\
\makecell{\wepase w/o thread mgt.} & 0.75    & 0.25 & 0.19 & 0.09 & 0.06 & 0.06 & 0.06 & 0.07 & 0.08 & 0.09 & 0.11 \\
\wepase   & \textbf{0.72}    & \textbf{0.25} & \textbf{0.19} & \textbf{0.09} & \textbf{0.07} & \textbf{0.06} & \textbf{0.06} & \textbf{0.06} & \textbf{0.06} & \textbf{0.06} & \textbf{0.06} \\
\midrule
\end{tabular}
\end{subtable}
\caption{(Navigation) Average planning times (\si{\second}) for wA*, PwA*, wPA*SE and \wepase for varying \numthreads, with $\wh=\wi=1$ (top) and with $\wh=\wi=50$ (bottom).}
\label{epase/tab/nav3d_epase_times}
\end{table*}

\begin{table*}[]
\footnotesize
\begin{subtable}{\textwidth}
\centering
\begin{tabular}{ccccccccccccc}
\toprule
\numthreads          & 1    & 4    & 5    & 10   & 15   & 20   & 30   & 40   & 50   & 70   & 90   \\\midrule
wPA*SE     & 5674 & 5673 & 5676 & 5746 & 5885 & 6112 & 6826 & 7607 & 7980 & 8125 & 8156 &  \multirow{2}{*}{$\wh = \wi = 1\ \ $}\\
\wepase   & \textbf{5660} & \textbf{5650} & \textbf{5649} & \textbf{5645} & \textbf{5640} & \textbf{5637} & \textbf{5634} & \textbf{5633} & \textbf{5633} & \textbf{5634} & \textbf{5633} \\
\midrule
\end{tabular}
\end{subtable}
\begin{subtable}{\textwidth}
\centering
\begin{tabular}{ccccccccccccc}
wPA*SE     & 1309 & 1451 & 1526 & 2028 & 2561 & 3121 & 4251 & 5307 & 6105 & 7231 & 7526 &  \multirow{2}{*}{$\wh = \wi = 50$}\\
\wepase   & \textbf{1324} & \textbf{1273} & \textbf{1277} & \textbf{1300} & \textbf{1301} & \textbf{1333} & \textbf{1343} & \textbf{1345} & \textbf{1334} & \textbf{1348} & \textbf{1343} \\
\midrule
\end{tabular}
\end{subtable}
\caption{(Navigation) Number of edges evaluated by wPA*SE and \wepase for varying \numthreads, with $\wh=\wi=1$ (top) and with $\wh=\wi=50$ (bottom).}
\label{epase/tab/nav3d_num_edges}
\end{table*}

Fig.~\ref{epase/fig/nav3d_plots}~(top) shows the average speedup achieved by wPA*SE and the baselines over wA* for varying \numthreads, for $\wh=\wi=1$ and $\wh=\wi=50$. The corresponding raw planning times are shown in Table~\ref{epase/tab/nav3d_epase_times}. The speedup achieved by PwA* saturates at the branching factor of the domain. This is expected since PwA* parallelizes the evaluation of the outgoing edges of a state being expanded. If \numthreads is greater than the branching factor \bfactor, $\numthreads-\bfactor$ threads remain unutilized.  For low \numthreads, the speedup achieved by \wepase matches that of wPA*SE. However, for high \numthreads, the speedup achieved by \wepase rapidly outpaces that of \wepase, especially for the inflated heuristic case. This is because \wepase is much more efficient than wPA*SE since it parallelizes edge evaluations instead of state expansions. This increased efficiency is more apparent with the availability of a larger computational budget in the form of a greater number of threads to allocate to evaluating edges. The speedup of wPA*SE reaches a peak and then rapidly deteriorates. This is also the case for \wepase w/o (improved) thread mgt. (described in the Method), even though the peak speedup of \wepaseold is higher than that of wPA*SE. However, the speedup of \wepase with the improved thread management strategy reaches a maximum and then saturates instead of degrading. This is due to the difference in the multithreading strategy employed by \wepase as explained in the Method.

Fig.~\ref{epase/fig/nav3d_plots}~(bottom) and Table~\ref{epase/tab/nav3d_num_edges} show that \wepase evaluates significantly fewer edges as compared to wPA*SE. With a greater number of threads, the difference is significant. This indicates that beyond parallelization of edge evaluations, the \weas formulation that \wepase uses has another advantage that if the heuristic is informative, \wepase evaluates fewer edges than wPA*SE, which contributes to the lower planning time of \wepase. The intuition behind this is that in wA*, the evaluation of the outgoing edges from a given state is tightly coupled with the expansion of the state because all the outgoing edges from a given state must be evaluated at the same time when the state is expanded. In \weas the evaluation of these edges is decoupled from each other since the search expands edges instead of states.

\subsection{Assembly Task}

\begin{figure}[]
    \centering
    \includegraphics[width=0.9\columnwidth]{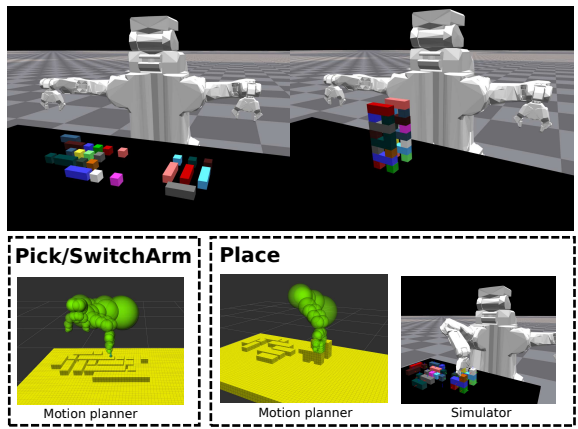}
    \caption{(Assembly) Top: The PR2 has to arrange a set of blocks on the table (left) into a given configuration (right). Bottom: It is equipped with \pick, \place and \swap controllers. The \pick controller uses the motion planner to reach a block. The \place controller uses the motion planner to place a block and simulates the outcome of releasing the block. The \swap controller uses the motion planner to move the active arm to a home position.}
    \label{epase/fig/assembly_problem_controllers}
\end{figure}

\begin{table}[]
\footnotesize
\centering
\begin{tabular}{ccccc}
\toprule
                           & wA*  & PwA* & wPA*SE & \wepase \\\midrule
\numthreads                & 1     & 25            & 10                            & 10    \\\midrule
Time (s)                   & 3010  & 1066          & 419                           & \textbf{301}                    \\
Speedup                    & 1     & 2.8           & 7.2                           & \textbf{10} \\\bottomrule
\end{tabular}
\caption{(Assembly) Average planning times and speedup over wA* for \wepase and the baselines.}
\label{epase/tab/assembly_times}
\end{table}

The second domain is a task and motion planning problem of assembling a set of blocks on a table into a given structure by a PR2, as shown in Fig.~\ref{epase/fig/assembly_problem_controllers}. This domain is similar to the one introduced in~\cite{mukherjee2022mplp}, but in this work, we enable the dual-arm functionality of the PR2. We assume full state observability of the 6D poses of the blocks and the robot's joint configuration. The goal is defined by the 6D poses of each block in the desired structure. The PR2 is equipped with \pick and \place controllers which are used as macro-actions in the high-level planning. In addition, there is a \swap controller which switches the active arm by moving the current active arm to a home position. All of these actions use a motion planner internally to compute collision-free trajectories in the workspace. Additionally, \place  has access to a simulator (NVIDIA Isaac Gym~\cite{makoviychuk2021isaac}) to simulate the outcome of placing a block at its desired pose. For example, if the planner tries to place a block at its final pose but has not placed the block underneath yet, the placed block will not be supported and the structure will not be stable. This would lead to an invalid successor during planning. We set a simulation timeout of  $\tsim=0.2$~\si{\second} to evaluate the outcome of placing a block. Considering the variability in the simulation speed and the overhead of communicating with the simulator, this results in a total wall time of less than $2$~\si{\second} for the simulation.  The motion planner has a timeout of $\tplan=60$~\si{\second} based on the wall time, and therefore that is the maximum time the motion planning can take. Successful \pick, \place and \swap actions have unit costs, and infinite otherwise. A \pick action on a block is successful if the motion planner finds a feasible trajectory to reach the block within \tplan. A \place action on a block is successful if the motion planner finds a feasible trajectory to place the block within \tplan and simulating the block placement results in the block coming to rest at the desired pose within \tsim. A \swap action is successful if the motion planner finds a feasible trajectory to the home position for the active arm within \tplan. The number of blocks that are not in their final desired pose is used as the admissible heuristic, with $\wh=\wi=5$. Table~\ref{epase/tab/assembly_times} shows planning times and speedup over wA* for \wepase and those of the baselines. We use 25 threads in the case of PwA* because that is the maximum branching factor in this domain. The numbers are averaged across 20 trials in each of which the blocks are arranged in random order on the table. Table~\ref{epase/tab/assembly_times} shows the average planning times and speedup over wA* of \wepase as compared to those of the lazy search baselines. \wepase achieves a 10x speedup over wA* and outperforms the baselines in this domain as well.

\FloatBarrier
\section{Conclusion and Future Work}
We presented an optimal parallel search algorithm \epase, that improves on PA*SE by parallelizing edge evaluations instead of state expansions. We also presented a sub-optimal variant \wepase and proved that it maintains bounded suboptimality guarantees. Our experiments showed that \wepase achieves an impressive reduction in planning time across two very different planning domains, which shows the generalizability of our conclusions. Empirically, we have observed \wepase to be a strict improvement over wPA*SE for domains with expensive to compute edges.  Even though we also test with a relatively large budget of threads, the performance improvement is significant even with a smaller budget of fewer than 10 threads, which is the case with typical mobile computers. Therefore in practice, we recommend using \wepase in a planning domain where 1) the computational bottleneck is edge evaluations and 2) successor states cannot be generated without evaluating edges and therefore parallelized lazy planning i.e. MPLP~\cite{mukherjee2022mplp} is not applicable.

MPLP~\cite{mukherjee2022mplp} and \epase use fundamentally different parallelization strategies. MPLP searches the graph lazily while evaluating edges in parallel, but relies on the assumption that states can be generated lazily without evaluating edges. On the other hand, \epase evaluates edges in a way that preserves optimality without the need for state (and edge) re-expansions but does not rely on lazy state generation. In domains where states can be generated lazily, however, the lazy state generation takes a non-trivial amount of time, these two different parallelization strategies can be combined. Both MPLP and \epase achieve a speedup at a certain number of threads, beyond which the speedup saturates. Therefore, utilizing both of them together will allow us to leverage more threads, which either of them on their own cannot.

\FloatBarrier
\section{Acknowledgements}
This work was supported by the ARL-sponsored A2I2 program, contract W911NF-18-2-0218, and ONR grant N00014-18-1-2775.

\bibliography{main}

\begin{thebibliography}{23}
\providecommand{\natexlab}[1]{#1}

\bibitem[{Aine et~al.(2016)Aine, Swaminathan, Narayanan, Hwang, and
  Likhachev}]{aine2016multi}
Aine, S.; Swaminathan, S.; Narayanan, V.; Hwang, V.; and Likhachev, M. 2016.
\newblock Multi-heuristic A*.
\newblock \emph{The International Journal of Robotics Research}, 35(1-3):
  224--243.

\bibitem[{Amato and Dale(1999)}]{amato1999probabilistic}
Amato, N.~M.; and Dale, L.~K. 1999.
\newblock Probabilistic roadmap methods are embarrassingly parallel.
\newblock In \emph{Proceedings 1999 IEEE International Conference on Robotics
  and Automation}, volume~1, 688--694.

\bibitem[{Burns et~al.(2010)Burns, Lemons, Ruml, and Zhou}]{burns2010best}
Burns, E.; Lemons, S.; Ruml, W.; and Zhou, R. 2010.
\newblock Best-first heuristic search for multicore machines.
\newblock \emph{Journal of Artificial Intelligence Research}, 39: 689--743.

\bibitem[{Butzke et~al.(2014)Butzke, Sapkota, Prasad, MacAllister, and
  Likhachev}]{butzke2014state}
Butzke, J.; Sapkota, K.; Prasad, K.; MacAllister, B.; and Likhachev, M. 2014.
\newblock State lattice with controllers: Augmenting lattice-based path
  planning with controller-based motion primitives.
\newblock In \emph{2014 IEEE/RSJ International Conference on Intelligent Robots
  and Systems}, 258--265.

\bibitem[{Devaurs, Sim{\'e}on, and Cort{\'e}s(2011)}]{devaurs2011parallelizing}
Devaurs, D.; Sim{\'e}on, T.; and Cort{\'e}s, J. 2011.
\newblock Parallelizing RRT on distributed-memory architectures.
\newblock In \emph{2011 IEEE International Conference on Robotics and
  Automation}, 2261--2266.

\bibitem[{Evett et~al.(1995)Evett, Hendler, Mahanti, and
  Nau}]{evett1995massively}
Evett, M.; Hendler, J.; Mahanti, A.; and Nau, D. 1995.
\newblock PRA*: Massively parallel heuristic search.
\newblock \emph{Journal of Parallel and Distributed Computing}, 25(2):
  133--143.

\bibitem[{Hart, Nilsson, and Raphael(1968)}]{hart1968formal}
Hart, P.~E.; Nilsson, N.~J.; and Raphael, B. 1968.
\newblock A formal basis for the heuristic determination of minimum cost paths.
\newblock \emph{IEEE transactions on Systems Science and Cybernetics}, 4(2):
  100--107.

\bibitem[{He et~al.(2021)He, Yao, Chen, Sun, and Chen}]{he2021efficient}
He, X.; Yao, Y.; Chen, Z.; Sun, J.; and Chen, H. 2021.
\newblock Efficient parallel A* search on multi-GPU system.
\newblock \emph{Future Generation Computer Systems}, 123: 35--47.

\bibitem[{Ichnowski and Alterovitz(2012)}]{ichnowski2012parallel}
Ichnowski, J.; and Alterovitz, R. 2012.
\newblock Parallel sampling-based motion planning with superlinear speedup.
\newblock In \emph{IROS}, 1206--1212.

\bibitem[{Irani and Shih(1986)}]{irani1986parallel}
Irani, K.; and Shih, Y.-f. 1986.
\newblock Parallel A* and AO* algorithms- An optimality criterion and
  performance evaluation.
\newblock In \emph{1986 International Conference on Parallel Processing,
  University Park, PA}, 274--277.

\bibitem[{Jacobs et~al.(2012)Jacobs, Manavi, Burgos, Denny, Thomas, and
  Amato}]{jacobs2012scalable}
Jacobs, S.~A.; Manavi, K.; Burgos, J.; Denny, J.; Thomas, S.; and Amato, N.~M.
  2012.
\newblock A scalable method for parallelizing sampling-based motion planning
  algorithms.
\newblock In \emph{2012 IEEE International Conference on Robotics and
  Automation}, 2529--2536.

\bibitem[{Jacobs et~al.(2013)Jacobs, Stradford, Rodriguez, Thomas, and
  Amato}]{jacobs2013scalable}
Jacobs, S.~A.; Stradford, N.; Rodriguez, C.; Thomas, S.; and Amato, N.~M. 2013.
\newblock A scalable distributed RRT for motion planning.
\newblock In \emph{2013 IEEE International Conference on Robotics and
  Automation}, 5088--5095.

\bibitem[{Kusnur et~al.(2021)Kusnur, Mukherjee, Saxena, Fukami, Koyama,
  Salzman, and Likhachev}]{kusnur2021planning}
Kusnur, T.; Mukherjee, S.; Saxena, D.~M.; Fukami, T.; Koyama, T.; Salzman, O.;
  and Likhachev, M. 2021.
\newblock A planning framework for persistent, multi-uav coverage with global
  deconfliction.
\newblock In \emph{Field and Service Robotics}, 459--474. Springer.

\bibitem[{Liang et~al.(2021)Liang, Sharma, LaGrassa, Vats, Saxena, and
  Kroemer}]{liang2021search}
Liang, J.; Sharma, M.; LaGrassa, A.; Vats, S.; Saxena, S.; and Kroemer, O.
  2021.
\newblock Search-Based Task Planning with Learned Skill Effect Models for
  Lifelong Robotic Manipulation.
\newblock \emph{arXiv preprint arXiv:2109.08771}.

\bibitem[{Makoviychuk et~al.(2021)Makoviychuk, Wawrzyniak, Guo, Lu, Storey,
  Macklin, Hoeller, Rudin, Allshire, Handa et~al.}]{makoviychuk2021isaac}
Makoviychuk, V.; Wawrzyniak, L.; Guo, Y.; Lu, M.; Storey, K.; Macklin, M.;
  Hoeller, D.; Rudin, N.; Allshire, A.; Handa, A.; et~al. 2021.
\newblock Isaac Gym: High Performance GPU-Based Physics Simulation For Robot
  Learning.
\newblock \emph{arXiv preprint arXiv:2108.10470}.

\bibitem[{Mukherjee, Aine, and Likhachev(2022)}]{mukherjee2022mplp}
Mukherjee, S.; Aine, S.; and Likhachev, M. 2022.
\newblock MPLP: Massively Parallelized Lazy Planning.
\newblock \emph{IEEE Robotics and Automation Letters}, 7(3): 6067--6074.

\bibitem[{Mukherjee et~al.(2021)Mukherjee, Paxton, Mousavian, Fishman,
  Likhachev, and Fox}]{mukherjee2021reactive}
Mukherjee, S.; Paxton, C.; Mousavian, A.; Fishman, A.; Likhachev, M.; and Fox,
  D. 2021.
\newblock Reactive long horizon task execution via visual skill and
  precondition models.
\newblock In \emph{2021 IEEE/RSJ International Conference on Intelligent Robots
  and Systems (IROS)}, 5717--5724. IEEE.

\bibitem[{Narayanan and Likhachev(2017)}]{narayanan2017heuristic}
Narayanan, V.; and Likhachev, M. 2017.
\newblock Heuristic search on graphs with existence priors for
  expensive-to-evaluate edges.
\newblock In \emph{Twenty-Seventh International Conference on Automated
  Planning and Scheduling}.

\bibitem[{Park, Pan, and Manocha(2016)}]{park2016parallel}
Park, C.; Pan, J.; and Manocha, D. 2016.
\newblock Parallel motion planning using poisson-disk sampling.
\newblock \emph{IEEE Transactions on Robotics}, 33(2): 359--371.

\bibitem[{Phillips, Likhachev, and Koenig(2014)}]{phillips2014pa}
Phillips, M.; Likhachev, M.; and Koenig, S. 2014.
\newblock PA* SE: Parallel A* for slow expansions.
\newblock In \emph{Twenty-Fourth International Conference on Automated Planning
  and Scheduling}.

\bibitem[{Pohl(1970)}]{pohl1970heuristic}
Pohl, I. 1970.
\newblock Heuristic search viewed as path finding in a graph.
\newblock \emph{Artificial intelligence}, 1(3-4): 193--204.

\bibitem[{Russell(2010)}]{russell2010artificial}
Russell, S.~J. 2010.
\newblock \emph{Artificial intelligence a modern approach}.
\newblock Pearson Education, Inc.

\bibitem[{Zhou and Zeng(2015)}]{zhou2015massively}
Zhou, Y.; and Zeng, J. 2015.
\newblock Massively parallel A* search on a GPU.
\newblock In \emph{Proceedings of the AAAI Conference on Artificial
  Intelligence}.

\end{thebibliography}

\end{document}